\newtheorem{definition}{Definition}
\newtheorem{proposition}{Proposition}
\begin{document}
\title{Linearly Constrained Weights: Reducing Activation Shift for
    Faster Training of Neural Networks}
\author{Takuro Kutsuna$\/^*$}
\date{\normalsize Toyota Central R\&D Labs., Inc. \\ $\/^*$kutsuna@mosk.tytlabs.co.jp}
\maketitle
\begin{abstract}
    In this paper, we first identify \textit{activation shift}, a simple but remarkable
    phenomenon in a
    neural network in which
    the preactivation value of a neuron has non-zero mean that depends on the angle
    between the weight vector of the neuron and the mean of the activation
    vector in the previous layer.
    We then propose \textit{linearly constrained weights (LCW)} to reduce
    the activation shift in both fully connected and convolutional layers.
    The impact of reducing the activation shift in a neural network is studied
    from the perspective of how the variance of variables in the network
    changes through layer operations in both forward and backward chains.
    We also discuss its relationship to the vanishing gradient problem.
    Experimental results show that LCW enables a deep feedforward network
    with sigmoid activation functions to be trained efficiently
    by resolving the vanishing gradient problem.
    Moreover, combined with batch normalization, LCW improves
    generalization performance of both feedforward and
    convolutional networks.
\end{abstract}
\section{Introduction}
\renewcommand{\thefootnote}{\fnsymbol{footnote}}
\footnote[0]{\scriptsize{The final authenticated publication is available online at \url{https://doi.org/10.1007/978-3-030-46147-8_16}}}
\renewcommand{\thefootnote}{\arabic{footnote}}
Neural networks with a single hidden layer have been shown to be universal
approximators~\cite{hornik1989multilayer,irie1988capabilities}.
However, an exponential number of neurons may be necessary to
approximate complex functions.
One solution to this problem is to use more hidden layers.
The representation power of a network increases exponentially with the
addition of layers~\cite{pmlr-v49-eldan16,pmlr-v49-telgarsky16}.
Various techniques have been proposed for training deep nets, that is,
neural networks with many hidden layers, such as
layer-wise pretraining~\cite{Hinton504}, rectified linear
units~\cite{5459469,icml2010_NairH10}, residual
structures~\cite{he2016deep}, and normalization
layers~\cite{JMLR:v17:gulchere16a,icml2015_ioffe15}.

In this paper, we first identify the \textit{activation shift}
that arises in the calculation of the preactivation value of a neuron.
The preactivation value is calculated as the dot product of
the weight vector of a neuron and an activation vector in the previous layer.
In a neural network, an activation vector in a layer can be viewed as a
random vector whose distribution is determined by the input distribution and the
weights in the preceding layers.
The preactivation of a neuron then has a \textit{non-zero mean} depending on
the angle between the weight vector of the neuron
and the mean of the activation vector in the previous layer.
The angles are generally different according to the neuron, indicating
that neurons have distinct mean values, even those in the same layer.

We propose the use of so-called \textit{linearly constrained
    weights~(LCW)} to resolve the activation shift in both fully connected and
convolutional layers.
An LCW is a weight vector subject to the constraint that the sum of
its elements is zero.
We investigate the impact of resolving activation shift in a neural network
from the perspective of how the variance of variables in a neural
network changes according to
layer operations in both forward and backward directions.
Interestingly, in a fully connected layer in which the activation shift has been
resolved by LCW, the variance is amplified by the same rate in both
forward and backward chains.
In contrast, the variance is more amplified in the forward chain
than in the backward chain when activation shift occurs in the layer.
This asymmetric characteristic is suggested to be a cause of the vanishing gradient in
feedforward networks with sigmoid activation functions.
We experimentally demonstrate that we can successfully train a deep
feedforward network with sigmoid activation functions by reducing the activation shift
using LCW.
Moreover, our experiments suggest that LCW improves generalization
performance of both feedforward and convolutional networks when combined
with batch normalization (BN)~\cite{icml2015_ioffe15}.

In Section~\ref{sec:act-shift}, we give a general definition of activation
shift in a neural network.
In Section~\ref{sec:line-constr-weights}, we propose LCW as an approach to
reduce activation shift and present a technique to efficiently train a network
with LCW.
In Section~\ref{sec:netw-arch-weight} we study the impact of removing
activation shift in a neural network from the perspective of variance
analysis and then discuss its relationship to the vanishing gradient problem.
In Section~\ref{sec:related-work}, we review related work.
We present empirical results in Section~\ref{sec:experiments} and
conclude the study in Section~\ref{sec:conclusion}.

\section{Activation Shift}
\label{sec:act-shift}
We consider a standard multilayer perceptron (MLP). For simplicity,
the number of neurons~$m$ is assumed to be the same in all layers.
The activation vector in layer~$l$ is denoted
by~$\bm{a}^l = \left(a_1^l, \ldots, a_m^l\right)^\top \in \mathbb{R}^m$.
The input vector to the network is denoted by~$\bm{a}^0$.
The weight vector of the $i$-th neuron in layer~$l$ is denoted
by~$\bm{w}_i^{l} \in \mathbb{R}^m$. It is generally assumed
that~$\|\bm{w}_i^l\| > 0$.
The activation of the $i$-th neuron in layer~$l$ is given
by~$a_i^{l} = f\left(z_i^{l}\right)$
and~$z_i^{l} = \bm{w}_i^{l} \cdot \bm{a}^{l-1} + b_i^{l}$,
where~$f$ is a nonlinear activation function, $b_i^{l} \in \mathbb{R}$ is the
bias term, and~$z_i^{l} \in \mathbb{R}$ denotes
the preactivation value.
Variables~$z_i^{l}$ and~$a_i^{l}$ are regarded as random variables whose
distributions are determined by the distribution of the input
vector~$\bm{a}^0$, given the weight vectors and the bias terms in the
preceding layers.

We introduce activation shift using the simple example
shown in Fig.~\ref{fig:act_shift_example}.
Fig.~\ref{fig:act_shift_example}\subref{fig:plot_W} is a heat map
representation of a weight
matrix~$\bm{W}^l \in \mathbb{R}^{100\times 100}$, whose $i$-th row
vector represents~$\bm{w}_i^l$.
In Fig.~\ref{fig:act_shift_example}\subref{fig:plot_W}, each element
of~~$\bm{W}^l$ is independently drawn from a uniform random
distribution in the range~$(-1, 1)$.
Fig.~\ref{fig:act_shift_example}\subref{fig:plot_A} shows an
activation matrix~$\bm{A}^{l-1} \in \mathbb{R}^{100\times 100}$, whose $j$-th
column vector represents the activation vector corresponding to the
$j$-th sample in a minibatch.
Each element of~$\bm{A}^{l-1}$ is randomly sampled from the range~$(0, 1)$.
We multiply~$\bm{W}^l$ and~$\bm{A}^{l-1}$ to obtain the preactivation
matrix~$\bm{Z}^l$, whose $i$-th row vector represents preactivation
values of the $i$-th neuron in layer~$l$, which is shown in
Fig.~\ref{fig:act_shift_example}\subref{fig:plot_WA}.
It is assumed that bias terms are all zero.
\begin{figure}[t]
    \centering
    \subfloat[Weight~$\bm{W}^l$.]{
        \includegraphics[width=0.3\linewidth, clip]{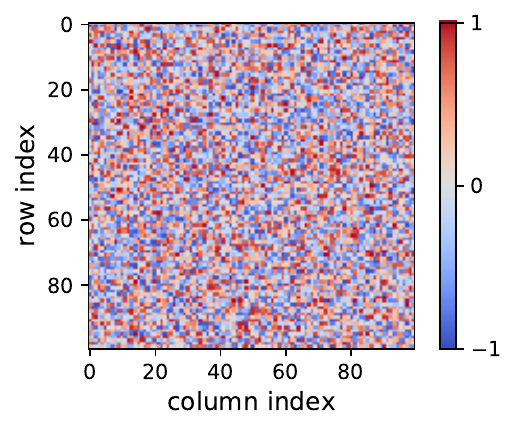}
        \label{fig:plot_W}}
    \hfill%
    \subfloat[Activation~$\bm{A}^{l-1}$.]{
        \includegraphics[width=0.29\linewidth, clip]{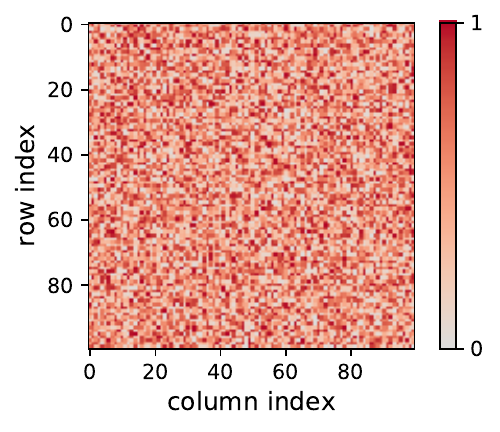}
        \label{fig:plot_A}}
    \hfill%
    \subfloat[Preactivation~$\bm{Z}^l$.]{
        \includegraphics[width=0.3\linewidth,  clip]{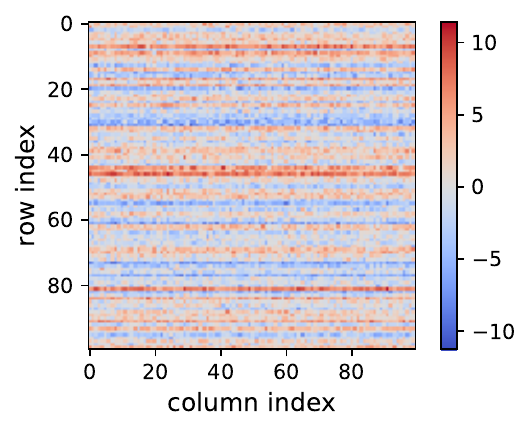}
        \label{fig:plot_WA}}
    \caption{Activation shift causes a horizontal stripe pattern in
        preactivation~$\bm{Z}^l = \bm{W}^l \bm{A}^{l-1}$,
        in which each element of~$\bm{W}^l$ and~$\bm{A}^{l-1}$ is randomly generated
        from the range~$(-1,1)$ and~$(0,1)$, respectively.}
    \label{fig:act_shift_example}
\end{figure}
Unexpectedly, a horizontal stripe pattern appears in the
heat map of~$\bm{Z}^l$ even though both~$\bm{W}^l$ and~$\bm{A}^{l-1}$
are randomly generated.
This pattern is attributed to the activation shift, which is
defined as follows:
\begin{definition}
    $\mathcal{P}_{\gamma}$ is an~$m$-dimensional probability distribution
    whose expected value is~$\gamma \bm{1}_m$, where~$\gamma \in \mathbb{R}$ and $\bm{1}_m$ is an
    $m$-dimensional vector whose elements are all one.
\end{definition}
\begin{proposition} \label{prop1}
    Assume that the activation vector~$\bm{a}^{l-1}$
    follows~$\mathcal{P}_{\gamma}$.
    Given a weight vector~$\bm{w}_i^l \in \mathbb{R}^m$
    such that~$\|\bm{w}_i^l\| > 0$,
    the expected value of~$\bm{w}_i^l \cdot \bm{a}^{l-1}$
    is~$|\gamma| \sqrt{m} \|\bm{w}_i^l\| \cos \theta_i^l$, where~$\theta_i^l$ is the angle
    between~$\bm{w}_i^l$ and~$\bm{1}_m$.
\end{proposition}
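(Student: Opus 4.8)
The plan is to reduce the statement to a short computation resting on linearity of expectation, exploiting the fact that the weight vector $\bm{w}_i^l$ is a deterministic quantity while only $\bm{a}^{l-1}$ is random. First I would move the expectation inside the dot product, writing $\mathbb{E}[\bm{w}_i^l \cdot \bm{a}^{l-1}] = \bm{w}_i^l \cdot \mathbb{E}[\bm{a}^{l-1}]$. This is justified because the dot product is a finite linear combination of the coordinates of $\bm{a}^{l-1}$ with fixed coefficients, so linearity of expectation applies coordinatewise.

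Next I would substitute the defining property of $\mathcal{P}_\gamma$, namely $\mathbb{E}[\bm{a}^{l-1}] = \gamma \bm{1}_m$, to obtain $\mathbb{E}[\bm{w}_i^l \cdot \bm{a}^{l-1}] = \gamma\,(\bm{w}_i^l \cdot \bm{1}_m)$. The remainder is purely geometric: I would rewrite the inner product $\bm{w}_i^l \cdot \bm{1}_m$ using the standard cosine formula together with $\|\bm{1}_m\| = \sqrt{m}$, which gives $\bm{w}_i^l \cdot \bm{1}_m = \|\bm{w}_i^l\|\,\sqrt{m}\,\cos \theta_i^l$. Here the hypothesis $\|\bm{w}_i^l\| > 0$ is precisely what guarantees that the angle $\theta_i^l$ is well defined. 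Combining these steps immediately yields the claimed expression up to the sign of $\gamma$.

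The only genuinely delicate point, and the step I would handle most carefully, is the appearance of $|\gamma|$ rather than $\gamma$ in the stated result. A direct substitution produces $\gamma\,\sqrt{m}\,\|\bm{w}_i^l\|\,\cos \theta_i^l$; to recover the absolute value one must interpret the angle as being measured against the actual mean direction $\mathbb{E}[\bm{a}^{l-1}] = \gamma \bm{1}_m$ rather than against $\bm{1}_m$ itself. When $\gamma < 0$ the mean points along $-\bm{1}_m$, so $\cos \theta_i^l$ flips sign relative to the $\bm{1}_m$ convention and the factor $\gamma$ combines with it to give $|\gamma|$; when $\gamma \geq 0$ the two conventions coincide. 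I would therefore state the angle convention explicitly and verify both sign cases, ensuring that $|\gamma| \cos \theta_i^l$ is read consistently with the geometry of the shift. Beyond this sign bookkeeping, there is no real obstacle, as the argument is a one-line application of linearity followed by the cosine identity.
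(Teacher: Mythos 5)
Your proposal is correct and follows essentially the same route as the paper's one-line proof: linearity of expectation to get $\bm{w}_i^l \cdot E(\bm{a}^{l-1}) = \gamma\,(\bm{w}_i^l\cdot\bm{1}_m)$, then the cosine identity with $\|\bm{1}_m\|=\sqrt{m}$. Your care over the sign of $\gamma$ is in fact warranted: the paper's proof silently writes $\bm{w}_i^l\cdot(\gamma\bm{1}_m)=\|\bm{w}_i^l\|\,\|\gamma\bm{1}_m\|\cos\theta_i^l$, which implicitly measures the angle against $\gamma\bm{1}_m$ rather than against $\bm{1}_m$ as the statement declares, so the $|\gamma|$ form only matches the stated angle convention when $\gamma\ge 0$ --- exactly the bookkeeping you flag, and consistent with how Proposition~\ref{prop2} later defines the angle relative to the actual mean $\hat{\bm{\mu}}$.
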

Proofs of all propositions are provided in Appendix~A in the
supplementary material.
\begin{definition}
    From Proposition~\ref{prop1}, the expected value of~$\bm{w}_i^l \cdot \bm{a}^{l-1}$ depends
    on~$\theta_i^l$ as long as~$\gamma \neq 0$. The
    distribution of~$\bm{w}_i^l \cdot \bm{a}^{l-1}$ is then biased
    depending on~$\theta_i^l$; this is called \textit{activation shift}.
\end{definition}
In Fig.~\ref{fig:act_shift_example}, each column vector
of~$\bm{A}^{l-1}$ follows~$\mathcal{P}_{\gamma}$ with $\gamma=0.5$.
Therefore, the $i$-th row of~$\bm{Z}^l$ is biased according to the angle between~$\bm{w}_i^l$
and~$\bm{1}_m$.
We can generalize Proposition~\ref{prop1} for any $m$-dimensional
distribution~$\hat{\mathcal{P}}$ instead of~$\mathcal{P}_{\gamma}$
by stating that the distribution of~$\bm{w}^l \cdot \hat{\bm{a}}^{l-1}$
is biased according to~$\hat{\theta}_i^l$ unless~$\|\hat{\bm{\mu}}\| = 0$
as follows:
\begin{proposition} \label{prop2}
    Assume that the activation vector~$\hat{\bm{a}}^{l-1}$follows an $m$-dimensional probability
    distribution~$\hat{\mathcal{P}}$ whose expected value is~$\hat{\bm{\mu}} \in \mathbb{R}^m$.
    Given~$\bm{w}_i^l \in \mathbb{R}^m$ such that~$\|\bm{w}_i^l\| > 0$, it follows
    that~$E(\bm{w}_i^l \cdot \hat{\bm{a}}^{l-1}) = \|\bm{w}_i^l\|\ \|\hat{\bm{\mu}}\| \cos \hat{\theta}_i^l$
    if~$\|\hat{\bm{\mu}}\| > 0$; otherwise, $E(\bm{w}_i^l \cdot \hat{\bm{a}}^{l-1}) = 0$,
    where $\hat{\theta}_i^l$ is the angle between~$\bm{w}_i^l$ and~$\hat{\bm{\mu}}$.
\end{proposition}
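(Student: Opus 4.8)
The plan is to reduce the statement to a one-line computation built from two elementary facts: linearity of expectation and the geometric form of the inner product. First I would note that the weight vector $\bm{w}_i^l$ is a fixed, non-random quantity, whereas only $\hat{\bm{a}}^{l-1}$ is random. Hence the expectation passes through the dot product, giving
\[
E\!\left(\bm{w}_i^l \cdot \hat{\bm{a}}^{l-1}\right) = \bm{w}_i^l \cdot E\!\left(\hat{\bm{a}}^{l-1}\right) = \bm{w}_i^l \cdot \hat{\bm{\mu}}.
\]
This is the crux of the argument, and it is valid whenever the first moment of $\hat{\mathcal{P}}$ exists, which is guaranteed by the hypothesis that $\hat{\bm{\mu}}$ is the expected value of $\hat{\mathcal{P}}$. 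Concretely, I would write $\bm{w}_i^l \cdot \hat{\bm{a}}^{l-1} = \sum_{k} (\bm{w}_i^l)_k\, (\hat{\bm{a}}^{l-1})_k$ and take expectations termwise to make the linearity step fully explicit.

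Next I would split into the two cases of the statement. When $\|\hat{\bm{\mu}}\| > 0$, both $\bm{w}_i^l$ and $\hat{\bm{\mu}}$ are nonzero vectors, so the angle $\hat{\theta}_i^l$ between them is well defined, and the standard geometric identity for the dot product yields
\[
\bm{w}_i^l \cdot \hat{\bm{\mu}} = \|\bm{w}_i^l\|\ \|\hat{\bm{\mu}}\| \cos \hat{\theta}_i^l,
\]
which is exactly the claimed formula. When $\|\hat{\bm{\mu}}\| = 0$ we instead have $\hat{\bm{\mu}} = \bm{0}$, so $\bm{w}_i^l \cdot \hat{\bm{\mu}} = 0$ and therefore $E(\bm{w}_i^l \cdot \hat{\bm{a}}^{l-1}) = 0$, matching the second clause.

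I do not anticipate a genuine obstacle, as the argument is elementary; the only point requiring care is the degenerate case $\hat{\bm{\mu}} = \bm{0}$, where the cosine expression is meaningless because the angle to the zero vector is undefined. This is precisely why the proposition records the value $0$ separately rather than folding it into the cosine formula. Finally, I would remark that this generalizes Proposition~\ref{prop1}: setting $\hat{\bm{\mu}} = \gamma \bm{1}_m$ gives $\|\hat{\bm{\mu}}\| = |\gamma|\sqrt{m}$, and $\hat{\theta}_i^l$ then coincides with the angle $\theta_i^l$ between $\bm{w}_i^l$ and $\bm{1}_m$, recovering the earlier expression $|\gamma| \sqrt{m}\, \|\bm{w}_i^l\| \cos \theta_i^l$.
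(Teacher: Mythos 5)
Your proof is correct and follows essentially the same route as the paper, which simply reuses the argument of Proposition~\ref{prop1}: pass the expectation through the dot product by linearity, then apply the geometric identity $\bm{w}_i^l \cdot \hat{\bm{\mu}} = \|\bm{w}_i^l\|\,\|\hat{\bm{\mu}}\|\cos\hat{\theta}_i^l$. Your explicit treatment of the degenerate case $\hat{\bm{\mu}} = \bm{0}$ is a small but welcome addition that the paper leaves implicit.
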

From Proposition~\ref{prop2},
if~$\bm{a}^{l-1}$
follows~$\hat{\mathcal{P}}$ with the mean vector~$\hat{\bm{\mu}}$ such
that~$\|\hat{\bm{\mu}}\| > 0$,
the preactivation~$z_i^{l}$ is biased
according to the angle between~$\bm{w}_i^{l}$
and~$\hat{\bm{\mu}}$.

Note that differences in~$E(z_i^l)$ are not resolved by simply
introducing bias terms~$b_i^l$, because~$b_i^l$ are optimized to
decrease the training loss function and not to absorb the differences
between~$E(z_i^l)$ during the network training. Our experiments suggest that
pure MLPs with several hidden layers are not trainable even though they incorporate bias terms.
We also tried to initialize~$b_i^l$ to absorb the difference
in~$E(z_i^l)$ at the beginning of the training, though it was
unable to train the network, especially when the network has many hidden layers.

\section{Linearly Constrained Weights}
\label{sec:line-constr-weights}
There are two approaches to reducing activation shift in a neural network.
The first one is to somehow make the expected value of the activation
of each neuron close to zero, because activation shift does not occur
if~$\|\hat{\bm{\mu}}\| = 0$ from Proposition~\ref{prop2}.
The second one is to somehow regularize the angle
between~$\bm{w}_i^{l}$ and~$E\left(\bm{a}^{l-1}\right)$.
In this section, we propose a method to reduce activation shift in a neural
network using the latter approach.
We introduce~$\mathcal{W}_{\text{LC}}$ as follows:
\begin{definition}
    $\mathcal{W}_{\text{LC}}$ is a subspace in $\mathbb{R}^m$ defined
    by
    \begin{align*}
        \mathcal{W}_{\text{LC}} := \left\{\bm{w} \in \mathbb{R}^m \ | \  \bm{w}
        \cdot \bm{1}_m= 0 \right\}.
    \end{align*}
    We call weight vector~$\bm{w}_i^{l}$ in~$\mathcal{W}_{\text{LC}}$
    the \textit{linearly constrained weights (LCWs)}.
\end{definition}
The following holds for~$\bm{w} \in \mathcal{W}_{\text{LC}}$:
\begin{proposition} \label{prop3}
    Assume that the activation vector~$\bm{a}^{l-1}$ follows~$\mathcal{P}_{\gamma}$.
    Given~$\bm{w}_i^l \in \mathcal{W}_{\text{LC}}$ such that~$\|\bm{w}_i^l\| > 0$,
    the expected value of~$\bm{w}_i^l \cdot \bm{a}^{l-1}$ is zero.
\end{proposition}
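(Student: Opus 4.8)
The plan is to reduce the claim to a one-line computation by combining linearity of expectation with the defining property of $\mathcal{W}_{\text{LC}}$. Since $\bm{a}^{l-1}$ follows $\mathcal{P}_{\gamma}$, its expected value is $\gamma \bm{1}_m$ by Definition. First I would pull the deterministic weight vector outside the expectation, writing $E(\bm{w}_i^l \cdot \bm{a}^{l-1}) = \bm{w}_i^l \cdot E(\bm{a}^{l-1}) = \gamma\,(\bm{w}_i^l \cdot \bm{1}_m)$. The constraint $\bm{w}_i^l \in \mathcal{W}_{\text{LC}}$ means precisely that $\bm{w}_i^l \cdot \bm{1}_m = 0$, so the whole expression collapses to zero regardless of the value of $\gamma$.

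Alternatively, and perhaps more in keeping with the preceding development, I would invoke Proposition~\ref{prop1} directly. That result gives $E(\bm{w}_i^l \cdot \bm{a}^{l-1}) = |\gamma| \sqrt{m} \|\bm{w}_i^l\| \cos \theta_i^l$, where $\theta_i^l$ is the angle between $\bm{w}_i^l$ and $\bm{1}_m$. The linear constraint $\bm{w}_i^l \cdot \bm{1}_m = 0$ says exactly that $\bm{w}_i^l$ is orthogonal to $\bm{1}_m$, i.e. $\cos \theta_i^l = 0$, whence the conclusion is immediate. Note that here the hypothesis $\|\bm{w}_i^l\| > 0$ is what guarantees the angle $\theta_i^l$ is well defined, so this route genuinely uses that assumption, whereas the direct dot-product computation does not require it at all.

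Given how short the argument is, there is essentially no substantive obstacle. The only points requiring a moment of care are to confirm that the orthogonality condition $\bm{w}_i^l \cdot \bm{1}_m = 0$ and the vanishing of $\cos \theta_i^l$ are genuinely equivalent (they are, since both vectors are nonzero), and to observe that the conclusion holds uniformly in $\gamma$, including the degenerate case $\gamma = 0$ where $\mathcal{P}_{\gamma}$ is centered at the origin. I would present the final proof as the two-line dot-product computation, which is self-contained and avoids any appeal to well-definedness of the angle.
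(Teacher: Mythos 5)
Your first argument is exactly the paper's proof: pull the expectation inside the dot product to get $\gamma\,(\bm{w}_i^l \cdot \bm{1}_m) = 0$ by the defining constraint of $\mathcal{W}_{\text{LC}}$. The alternative route via Proposition~\ref{prop1} is also valid but unnecessary, and your choice to present the direct two-line computation matches the paper.
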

Generally, activation vectors in a network do
not follow~$\mathcal{P}_\gamma$, and consequently, LCW cannot resolve
the activation shift perfectly.
However, we experimentally observed that
the activation vector approximately follows~$\mathcal{P}_\gamma$ in each
layer.
Fig.~\ref{fig:boxplot_activation_lcw_159}\subref{fig:boxplot_activation_lcw_159_init}
shows boxplot summaries of~$a_i^l$ in
a 10-layer sigmoid MLP with LCW, in which the weights of the
network were initialized using the method that will be
explained in Section~\ref{sec:netw-arch-weight}.
We used a minibatch of samples in the CIFAR-10
dataset~\cite{krizhevsky2009learning} to evaluate the distribution of~$a_i^l$.
In the figure, the 1\%, 25\%,
50\%, 75\%, and 99\% quantiles are displayed as whiskers or boxes.
We see that~$a_i^l$ distributes around~$0.5$ in each neuron,
which suggests that~$\bm{a}^{l} \sim \mathcal{P}_\gamma$
approximately holds in every layer.
We also observed the distribution of~$a_i^l$ after 10 epochs of
training, which are shown in
Fig.~\ref{fig:boxplot_activation_lcw_159}\subref{fig:boxplot_activation_lcw_159_after10epochs}.
We see that~$\bm{a}^{l}$ are less likely follow $\mathcal{P}_\gamma$,
but~$a_i^l$ takes various values in each neuron.
In contrast, if we do not apply LCW to the network, the variance of~$a_i^l$
rapidly shrinks through layers immediately after the initialization as
shown in Fig.~\ref{fig:boxplot_activation_159}, in which weights
are initialized by the method in~\cite{glorot2010understanding}.
Experimental results in Section~\ref{sec:experiments} suggest that we can
train MLPs with several dozens of layers very efficiently by applying
the LCW. The effect of resolving the activation shift by applying LCW
will be theoretically analyzed in Section~\ref{sec:netw-arch-weight}.
\begin{figure}[t]
    \centering
    \subfloat[Immediately after the initialization.]{
        \includegraphics[width=\linewidth, clip]{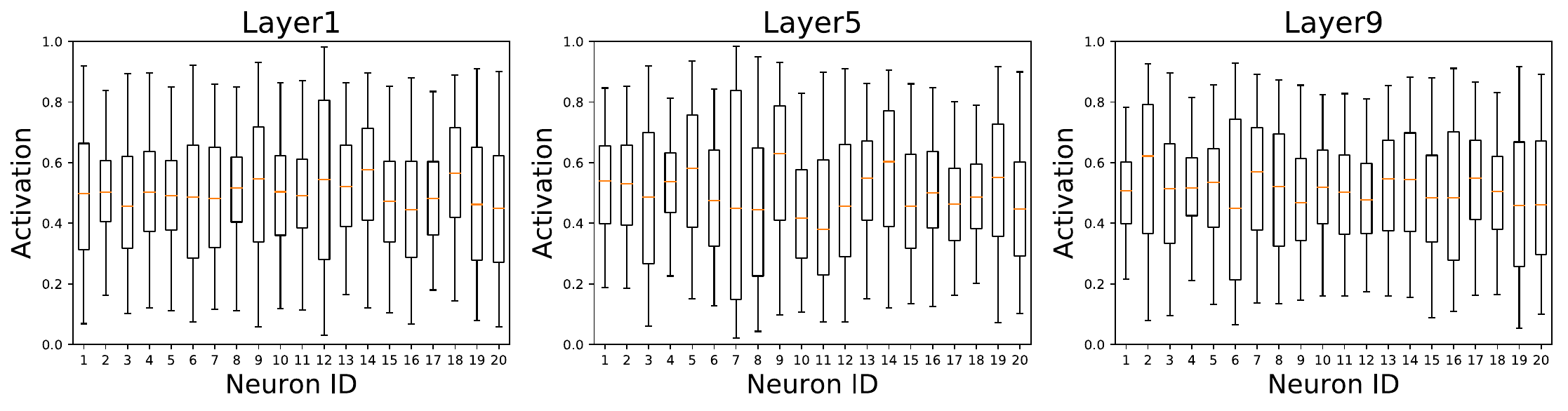}
        \label{fig:boxplot_activation_lcw_159_init}}
    \hfill%
    \subfloat[After 10 epochs training.]{
        \includegraphics[width=\linewidth, clip]{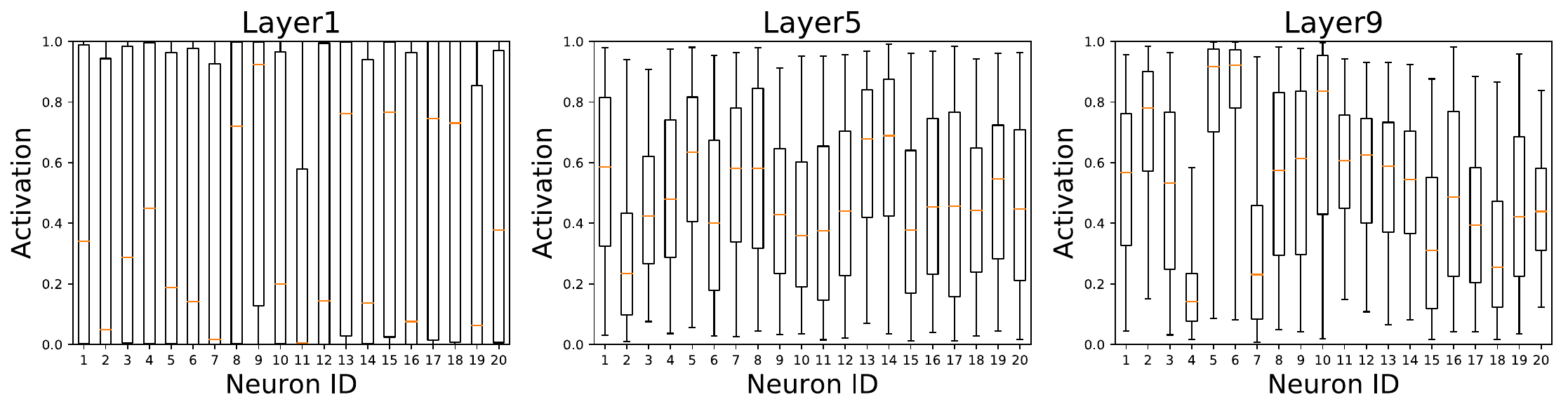}
        \label{fig:boxplot_activation_lcw_159_after10epochs}}
    \caption{Boxplot summaries of~$a_i^l$ on the first 20 neurons in layers
        1,5, and 9 of the 10-layer sigmoid MLP with LCW.}
    \label{fig:boxplot_activation_lcw_159}
\end{figure}
\begin{figure}[t]
    \centering
    \includegraphics[width=\linewidth, clip]{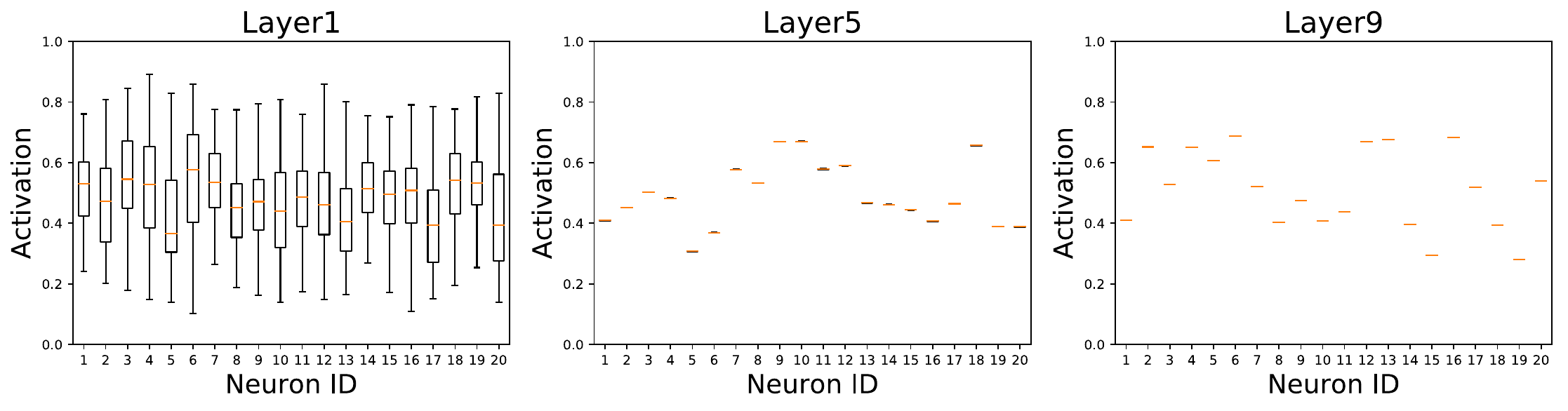}
    \caption{Boxplot summaries of~$a_i^l$ on neurons in layers
        1,5, and 9 of the 10-layer sigmoid MLP \textit{without} LCW, in which
        weights are initialized by the method in~\cite{glorot2010understanding}.}
    \label{fig:boxplot_activation_159}
\end{figure}

It is possible to force~$\bm{a}^l$ to follow~$\mathcal{P}_\gamma$ by applying
BN to preactivation~$z_i^l$. The distribution of~$z_i^l$ is then normalized to
have zero-mean and unit variance, and consequently, $a_i^l = f(z_i^l)$ are
more likely to follow the same distribution, indicating
that~$\bm{a}^l \sim \mathcal{P}_\gamma$ holds.
As will be discussed in Section~\ref{sec:related-work}, BN itself also
has an effect of reducing activation shift.
However, our experimental results suggest that we can train deep
networks more smoothly by combining LCW and BN, which will be shown in
Section~\ref{sec:experiments}.

\subsection{Learning LCW via Reparameterization}
\label{sec:learning-lcw-via}
A straightforward way to train a neural network with LCW is to solve a constrained
optimization problem, in which a loss function is minimized under the
condition that each weight vector is included
in~$\mathcal{W}_{\text{LC}}$.
Although several methods are available to solve such constrained problems, for example,
the gradient projection method~\cite{Luenberger:2015:LNP:2843008},
it might be less efficient to solve a constrained optimization problem than to
solve an unconstrained one.
We propose a reparameterization technique that enables us to train a
neural network with LCW using a solver for unconstrained
optimization.
The constraints on the weight vectors are embedded into
the structure of the neural network by the following reparameterization.

\noindent
\underline{Reparameterization}:
Let~$\bm{w}_i^l \in \mathbb{R}^m$ be a weight vector in a neural
network.
To apply LCW to~$\bm{w}_i^l$, we
reparameterize~$\bm{w}_i^l$ using
vector~$\bm{v}_i^l \in \mathbb{R}^{m-1}$ as~$\bm{w}_i^l = \bm{B}_m \bm{v}_i^l$,
where~$\bm{B}_m \in \mathbb{R}^{m \times (m-1)}$ is an orthonormal basis
of~$\mathcal{W}_{\text{LC}}$, written as a matrix of column vectors.

It is obvious that~$\bm{w}_i^l = \bm{B}_m \bm{v}_i^l \in \mathcal{W}_{\text{LC}}$.
We then solve the optimization problem in
which~$\bm{v}_i^l$ is considered as a new variable in place of~$\bm{w}_i^l$.
This optimization problem is unconstrained because~$\bm{v}_i^l \in \mathbb{R}^{m-1}$.
We can search for~$\bm{w}_i^l \in \mathcal{W}_{\text{LC}}$ by
exploring~$\bm{v}_i^l \in \mathbb{R}^{m-1}$.
The calculation of an orthonormal basis of~$\mathcal{W}_{\text{LC}}$ is
described in Appendix~B in the supplementary material.
Note that the proposed reparameterization can be
implemented easily and efficiently using modern frameworks for
deep learning based on GPUs.

\subsection{LCW for Convolutional Layers}
We consider a convolutional layer with $C_{out}$~convolutional
kernels. The size of each kernel is~$C_{in} \times K_h \times K_w$,
where~$C_{in}$, $K_h$, and~$K_w$ are
the number of the input channels, height of the kernel, and
width of the kernel, respectively.
The layer outputs $C_{out}$ channels of feature maps.
In a convolutional layer, activation shift occurs at the channel level,
that is, the preactivation has different mean value in each output
channel depending on the kernel of the channel.
We propose a simple extension of LCW for reducing the activation shift
in convolutional layers by introducing a
subspace~$\mathcal{W}_{\text{LC}}^{\text{kernel}}$
in~$\mathbb{R}^{C_{in} \times K_h \times K_w}$ defined as follows:
\begin{align}
    \mathcal{W}_{\text{LC}}^{\text{kernel}}:=\left\{\bm{w} \in
    \mathbb{R}^{C_{in} \times K_h \times K_w} \ \Biggl| \
    \sum_{i=1}^{C_{in}}\sum_{j=1}^{K_h}\sum_{k=1}^{K_w} w_{i,j,k} = 0\right\}, \notag
\end{align}
where~$w_{i,j,k}$ indicates the~$(i,j,k)$-th element of~$w$.
Subspace $\mathcal{W}_{\text{LC}}^{\text{kernel}}$ is a straightforward extension
of~$\mathcal{W}_{\text{LC}}$ to the kernel space.
To apply LCW to a convolutional layer, we restrict each kernel of the layer
in~$\mathcal{W}_{\text{LC}}^{\text{kernel}}$.

It is possible to apply the reparameterization trick described in the
previous subsection to LCW for convolutional layers.
We can reparameterize the kernel using an orthonormal basis
of~$\mathcal{W}_{\text{LC}}^{\text{kernel}}$
in which the kernel in~$\mathbb{R}^{C_{in} \times K_h \times K_w}$ is
unrolled into a vector of length~$C_{in} K_h  K_w$.

\section{Variance Analysis}
\label{sec:netw-arch-weight}
In this section, we first investigate the effect of removing activation shift
in a neural network based on an analysis of how the
variance of variables in the network changes through layer operations
both in forward and backward directions.
Then, we discuss its relationship to the vanishing gradient problem.

\subsection{Variance Analysis of a Fully Connected Layer}
\label{sec:vari-analys-fully}
The forward calculation of a fully connected layer
is~$\bm{z}^l = \bm{W}^l \bm{a}^{l-1} + \bm{b}^l$,
where~$\bm{W}^l=(\bm{w}_1^l, \ldots, \bm{w}_m^l)^\top$.
We express the $j$-th column vector of~$\bm{W}^l$ as~$\tilde{\bm{w}}_j^l$.
If we denote the gradient of a loss function with respect to
parameter~$v$ as~$\nabla_v$, the backward calculation
regarding~$\bm{a}^{l-1}$
is~$\nabla_{\bm{a}^{l-1}} = (\bm{W}^l)^\top \nabla_{\bm{z^l}}$.
The following proposition holds for the forward computation, in which
$\bm{I}_{m}$ is the identity matrix of order~$m \times m$,
$V$ indicates the variance, and~$Cov$ denotes the variance-covariance matrix.
\begin{proposition} \label{prop4}
    Assuming that~$\bm{w}_i^l \in \mathcal{W}_{\text{LC}}$,
    $E(\bm{a}^{l-1}) = \gamma_{\bm{a}^{l-1}} \bm{1}_m$
    with~$\gamma_{\bm{a}^{l-1}} \in \mathbb{R}$,
    $Cov(\bm{a}^{l-1})=\sigma_{\bm{a}^{l-1}}^2 \bm{I}_m$
    with~$\sigma_{\bm{a}^{l-1}} \in \mathbb{R}$,
    and~$\bm{b}^l = \bm{0}$,
    it holds that~$E(z_i^l) = 0$
    and~$V(z_i^l) = \sigma_{\bm{a}^{l-1}}^2 \|\bm{w}_i^l\|^2$.\footnote{A
        similar result is discussed in~\cite{huang2017centered}, but our result
        is more general because we do not
        assume the distribution of~$\bm{a}^{l-1}$ to be Gaussian distribution,
        which is assumed in~\cite{huang2017centered}.}
\end{proposition}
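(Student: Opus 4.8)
The plan is to treat the mean and the variance separately, since under the stated hypotheses each reduces to a short computation. For the mean, I would first use the bias assumption $\bm{b}^l = \bm{0}$ to write $z_i^l = \bm{w}_i^l \cdot \bm{a}^{l-1}$, so that $E(z_i^l) = E(\bm{w}_i^l \cdot \bm{a}^{l-1})$. The hypothesis $E(\bm{a}^{l-1}) = \gamma_{\bm{a}^{l-1}} \bm{1}_m$ is precisely the statement that $\bm{a}^{l-1}$ follows $\mathcal{P}_{\gamma_{\bm{a}^{l-1}}}$, and $\bm{w}_i^l \in \mathcal{W}_{\text{LC}}$, so Proposition~\ref{prop3} applies verbatim and yields $E(z_i^l) = 0$. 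Equivalently, by linearity of expectation, $E(z_i^l) = \gamma_{\bm{a}^{l-1}}(\bm{w}_i^l \cdot \bm{1}_m) = 0$ directly from the defining constraint of $\mathcal{W}_{\text{LC}}$.

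For the variance, I would view $z_i^l$ as a scalar linear form in the random vector $\bm{a}^{l-1}$ and express its variance as the associated quadratic form in the covariance matrix,
\begin{align*}
V(z_i^l) = V(\bm{w}_i^l \cdot \bm{a}^{l-1}) = (\bm{w}_i^l)^\top Cov(\bm{a}^{l-1})\, \bm{w}_i^l.
\end{align*}
Substituting the isotropy hypothesis $Cov(\bm{a}^{l-1}) = \sigma_{\bm{a}^{l-1}}^2 \bm{I}_m$ collapses this to $\sigma_{\bm{a}^{l-1}}^2 (\bm{w}_i^l)^\top \bm{w}_i^l = \sigma_{\bm{a}^{l-1}}^2 \|\bm{w}_i^l\|^2$, as claimed. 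If an index-level argument is preferred, I would instead expand $V(\sum_j w_{ij} a_j^{l-1}) = \sum_{j,k} w_{ij} w_{ik}\, Cov(a_j^{l-1}, a_k^{l-1})$ and invoke $Cov(a_j^{l-1}, a_k^{l-1}) = \sigma_{\bm{a}^{l-1}}^2 \delta_{jk}$ to eliminate every off-diagonal term.

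There is no serious obstacle here; the statement is essentially the standard identity $V(\bm{c} \cdot \bm{X}) = \bm{c}^\top Cov(\bm{X})\, \bm{c}$ specialized to the two structural assumptions. The only point deserving care is recognizing that the diagonal covariance is exactly what removes the cross-terms, so the variance depends on $\bm{w}_i^l$ only through $\|\bm{w}_i^l\|^2$ and not through its orientation. I would also emphasize, matching the footnote's comparison with~\cite{huang2017centered}, that neither step invokes Gaussianity of $\bm{a}^{l-1}$: the mean result needs only the first moment (via $\mathcal{P}_\gamma$) and the variance result needs only the second-moment structure, so the proposition holds for any distribution meeting the two moment hypotheses.
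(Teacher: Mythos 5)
Your proof is correct and follows essentially the same route as the paper: the mean via linearity of expectation and the constraint $\bm{w}_i^l \cdot \bm{1}_m = 0$, and the variance via the quadratic form $(\bm{w}_i^l)^\top Cov(\bm{a}^{l-1})\,\bm{w}_i^l$ with the isotropic covariance. The additional remarks on cross-term cancellation and distribution-freeness are accurate but not needed beyond what the paper's two-line computation already establishes.
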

We also have the following proposition for the backward computation.
\begin{proposition} \label{prop5}
    Assuming that~$E(\nabla_{\bm{z}^{l}}) = \bm{0}$
    and~$Cov(\nabla_{\bm{z}^{l}})=\sigma_{\nabla_{\bm{z}^{l}}}^2\bm{I}_m$
    with~$\sigma_{\nabla_{\bm{z}^{l}}} \in \mathbb{R}$,
    it holds that~$E(\nabla_{a_j^{l-1}}) = 0$
    and~$V(\nabla_{a_j^{l-1}}) = \sigma_{\nabla_{\bm{z}^{l}}}^2 \|\tilde{\bm{w}}_j^l\|^2$.
\end{proposition}
For simplicity, we assume that~$\forall i, \|\bm{w}_i^l\|^2 = \eta^l$
and~$\forall j, \|\tilde{\bm{w}}_j^l\|^2 = \xi^l$.
Proposition~\ref{prop4} then indicates that, in the forward computation,
$V(z_i^l)$, the variance of the output,
becomes $\eta^l$~times larger than that of the input, $V(a_i^{l-1})$.
Proposition~\ref{prop5} indicates that,
in the backward chain, $V(\nabla_{a_i^{l-1}})$, the variance of the output,
becomes $\xi^l$~times larger than that of the input, $V(\nabla_{z_i^l})$.
If~$\bm{W}^l$ is a square matrix, then~$\eta^l = \xi^l$ (see Appendix~A
for proof), meaning
that the variance is amplified at the same rate in both the forward and
backward directions.
Another important observation is that, if we replace~$\bm{W}^l$
with~$\kappa \bm{W}^l$, the rate of amplification of the variance becomes
$\kappa^2$~times larger in both the forward and backward chains.
This property does not hold
if~$\bm{w}_i^l \not\in \mathcal{W}_{\text{LC}}$, because in this
case~$E(z_i^l) \neq 0$ because of the effect of the activation shift.
The variance is then more amplified in the forward chain than in the backward
chain by the weight rescaling.

\subsection{Variance Analysis of a Nonlinear Activation Layer}
\label{sec:vari-analys-nonls}
The forward and backward chains of the nonlinear activation layer are given
by~$a_i^l = f(z_i^l)$ and~$\nabla_{z_i^l} = f'(z_i^l) \nabla_{a_i^l}$, respectively.
The following proposition holds if~$f$ is the ReLU~\cite{5459469,icml2010_NairH10} function.
\begin{proposition} \label{prop6}
    Assuming that~$z_i^l$ and~$\nabla_{a_i^l}$ independently
    follow~$\mathcal{N}(0, \sigma_{z_i^l}^2)$
    and $\mathcal{N}(0, \sigma_{\nabla_{a_i^l}}^2)$, respectively,
    where~$\mathcal{N}(\mu, \sigma^2)$ indicates a normal distribution with
    mean~$\mu$ and variance~$\sigma^2$, it holds
    that
    \begin{align}
        V(a_i^l) = \frac{\sigma_{z_i^l}^2}{2}\left(1-\frac{1}{\pi}\right)
        \quad \text{and} \quad
        V(\nabla_{z_i^l}) = \frac{\sigma_{\nabla_{a_i^l}}^2}{2}. \notag
    \end{align}
\end{proposition}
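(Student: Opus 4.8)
The plan is to evaluate both variances directly from the definition of the ReLU, $f(z)=\max(0,z)$, together with its derivative $f'(z)=\mathbf{1}\{z>0\}$, where $\mathbf{1}\{\cdot\}$ denotes the indicator function; the value of $f'$ at the single point $z=0$ is immaterial, as it occurs with probability zero under a continuous distribution. Throughout I abbreviate $\sigma:=\sigma_{z_i^l}$ and $\tau:=\sigma_{\nabla_{a_i^l}}$, and I use the identity $V(X)=E(X^2)-E(X)^2$.

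For the forward statement I would compute the first two moments of $a_i^l=\max(0,z_i^l)$ with $z_i^l\sim\mathcal{N}(0,\sigma^2)$. The first moment is the standard half-normal integral
\[
  E(a_i^l)=\int_0^\infty z\,\frac{1}{\sqrt{2\pi}\,\sigma}\,e^{-z^2/(2\sigma^2)}\,dz=\frac{\sigma}{\sqrt{2\pi}},
\]
so that $E(a_i^l)^2=\sigma^2/(2\pi)$. For the second moment I would use that $(\max(0,z))^2=z^2\,\mathbf{1}\{z>0\}$ and that $z^2$ times a zero-mean Gaussian density is an even function, so symmetry gives
\[
  E\bigl((a_i^l)^2\bigr)=\int_0^\infty z^2\,\frac{1}{\sqrt{2\pi}\,\sigma}\,e^{-z^2/(2\sigma^2)}\,dz=\tfrac12\int_{-\infty}^\infty z^2\,\frac{1}{\sqrt{2\pi}\,\sigma}\,e^{-z^2/(2\sigma^2)}\,dz=\frac{\sigma^2}{2},
\]
the final integral being the variance of the full Gaussian. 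Subtracting yields $V(a_i^l)=\sigma^2/2-\sigma^2/(2\pi)=\tfrac{\sigma^2}{2}\left(1-1/\pi\right)$, as claimed.

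For the backward statement I would substitute $\nabla_{z_i^l}=f'(z_i^l)\,\nabla_{a_i^l}=\mathbf{1}\{z_i^l>0\}\,\nabla_{a_i^l}$ and exploit the assumed independence of $z_i^l$ and $\nabla_{a_i^l}$ to factor the expectations. The mean vanishes, since $E(\nabla_{z_i^l})=E(\mathbf{1}\{z_i^l>0\})\,E(\nabla_{a_i^l})=\tfrac12\cdot 0=0$. For the second moment, using the idempotence $(\mathbf{1}\{z>0\})^2=\mathbf{1}\{z>0\}$ together with $P(z_i^l>0)=\tfrac12$,
\[
  E\bigl((\nabla_{z_i^l})^2\bigr)=E(\mathbf{1}\{z_i^l>0\})\,E\bigl((\nabla_{a_i^l})^2\bigr)=\tfrac12\,\tau^2,
\]
whence $V(\nabla_{z_i^l})=\tau^2/2$.

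I do not expect a genuine obstacle here, since the computation is elementary once $f'$ is written as an indicator. The only points deserving a word of care are (i) that the non-differentiability of $f$ at $z=0$ can be ignored because the event $\{z_i^l=0\}$ has Gaussian measure zero, and (ii) the correct use of independence in the backward step, which is what lets the indicator depending on $z_i^l$ separate from the squared gradient $(\nabla_{a_i^l})^2$; the zero-mean hypothesis on $\nabla_{a_i^l}$ is precisely what annihilates the cross term in the mean.
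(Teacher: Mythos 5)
Your proof is correct, and it reaches both identities by a route that differs from the paper's. For the forward part, the paper does not integrate directly: it observes that $a_i^l$ follows a mixture of a point mass at $0$ (weight $0.5$) and the truncated normal $\mathcal{TN}_{(0,\infty)}(0,\sigma_{z_i^l}^2)$ (weight $0.5$), quotes the mean $\sigma_{z_i^l}\sqrt{2/\pi}$ and variance $\sigma_{z_i^l}^2(1-2/\pi)$ of the truncated normal, and assembles $V(a_i^l)$ from the standard mixture-variance formula $p_A\sigma_A^2+p_B\sigma_B^2+p_Ap_B(\mu_A-\mu_B)^2$. For the backward part, it treats $f'(z_i^l)$ as a Bernoulli$(0.5)$ variable and applies the formula $V(XY)=V(X)V(Y)+E^2(X)V(Y)+E^2(Y)V(X)$ for independent $X,Y$. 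You instead compute $E(a_i^l)$ and $E\bigl((a_i^l)^2\bigr)$ by direct half-normal integration and use $V(X)=E(X^2)-E(X)^2$, and in the backward step you exploit the idempotence of the indicator to get the second moment in one line. Your version is more self-contained (no appeal to tabulated truncated-normal moments or to the mixture and product variance formulas), while the paper's version makes the structural decomposition of the ReLU output explicit and reuses off-the-shelf identities; both are complete and the numerical agreement confirms either path. Your remarks on the measure-zero non-differentiability at $z=0$ and on where independence is actually used are correct and slightly more careful than the paper's treatment.
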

We denote the rate of amplification of variance in the
forward and backward directions of a nonlinear activation function
by~$\phi_{\text{fw}} := V(a_i^l)/V(z_i^l)$
and~$\phi_{\text{bw}} := V(\nabla_{z_i^l})/V(\nabla_{a_i^l})$, respectively.
Proposition~\ref{prop6} then indicates that the variance is
amplified by a factor of~$\phi_{\text{fw}} = 0.34$ in
the forward chain and by a factor of~$\phi_{\text{bw}} = 0.5$ in the
backward chain through the ReLU activation layer.

If~$f$ is the sigmoid activation, there is no analytical solution for the
variance of~$a_i^l$ and~$\nabla_{z_i^l}$.
We therefore numerically examined~$\phi_{\text{fw}}$
and~$\phi_{\text{bw}}$ for the
sigmoid activation
under the conditions that~$z_i^l$ follows~$\mathcal{N}(0,\hat{\sigma}^2)$
for~$\hat{\sigma} \in \{0.5, 1, 2\}$ and~$\nabla_{a_i^l}$
follows~$\mathcal{N}(0, 1)$.
As a result, we
obtained~$(\phi_{\text{fw}}, \phi_{\text{bw}})=(0.236, 0.237)$,
$(0.208, 0.211)$, and~$(0.157, 0.170)$ for $\hat{\sigma} = 0.5, 1$,
and~$2$, respectively.
It suggests that the difference
between~$\phi_{\text{fw}}$ and~$\phi_{\text{bw}}$ in the sigmoid
activation layer decreases as the
variance of~$z_i^l$ decreases.

\subsection{Relationship to the Vanishing Gradient Problem}
\label{sec:relat-vanish-grad}
We consider an MLP in which the number of neurons is the
same in all hidden layers.
We initialize weights in the network by the method
based on minibatch statistics:
weights are first generated randomly, then rescaled so that the
preactivation in each layer
has unit variance on the minibatch of samples.
In fully connected layers with standard weights,
the variance of variables in the network is more amplified in the
forward chain than in the backward chain by the weight rescaling,
as discussed in Subsection~\ref{sec:vari-analys-fully}.
In contrast, in the sigmoid activation layers, the rate of amplification of the
variance is almost the same in the forward and backward directions, as
mentioned in the previous subsection.
Then, the variance of the preactivation gradient decreases
exponentially by rescaling the weights to maintain the variance of the
preactivation in the forward chain, resulting in the vanishing gradient,
that is, the preactivation gradient
in earlier layers has almost zero variance, especially
when the network have many layers.

In contrast,
when the LCW is applied to the network, the variance is amplified at the
same rate in both the forward and backward chains through fully
connected layers regardless of the weight rescaling.
In this case, the preactivation gradient has a similar variance in each
layer after the initialization, assuming that the sigmoid activation is used.
Concretely, the variance is amplified
by approximately~$0.21$ through the sigmoid activation layers in both
the forward and backward chains. Then, fully
connected layers are initialized to have the amplification rate
of~$1/0.21$ to keep the preactivation variance in the forward chain.
The gradient variance is then also amplified by~$1/0.21$ in
the backward chain of fully connected layers with LCW, indicating that the
gradient variance is also preserved in the backward chain.

From the analysis in the previous subsections, we also see that
normal fully connected layer and the ReLU layer have
opposite effect on amplifying the variance in each layer,
This may be another explanation why ReLU works well in practice without
techniques such as BN.

\subsection{Example}
For example, we use a $20$-layered MLP with sigmoid activation functions.
The weights of the MLP are initialized according to
the method described in the previous subsection.
We randomly took $100$~samples from the CIFAR-10 dataset and input them into the MLP.
The upper part of
Fig.~\ref{fig:mlp_20_256_sigmoid_gradients}~\subref{fig:mlp_20_256_sigmoid_gradient}
shows boxplot summaries of the preactivation in each layer.
The lower part shows boxplot summaries of the gradient with respect to the
preactivation in each layer, in which the standard cross-entropy loss is
used to obtain the gradient.
From Fig.~\ref{fig:mlp_20_256_sigmoid_gradients}~\subref{fig:mlp_20_256_sigmoid_gradient},
we see that the variance of the preactivation is preserved in the forward
chain, whereas the variance of the preactivation gradient rapidly
shrinks to zero in the backward chain, suggesting the vanishing gradient.
\begin{figure}[htbp]
    \centering
    \subfloat[MLP with standard weights.]{
        \includegraphics[width=0.8\linewidth, clip]{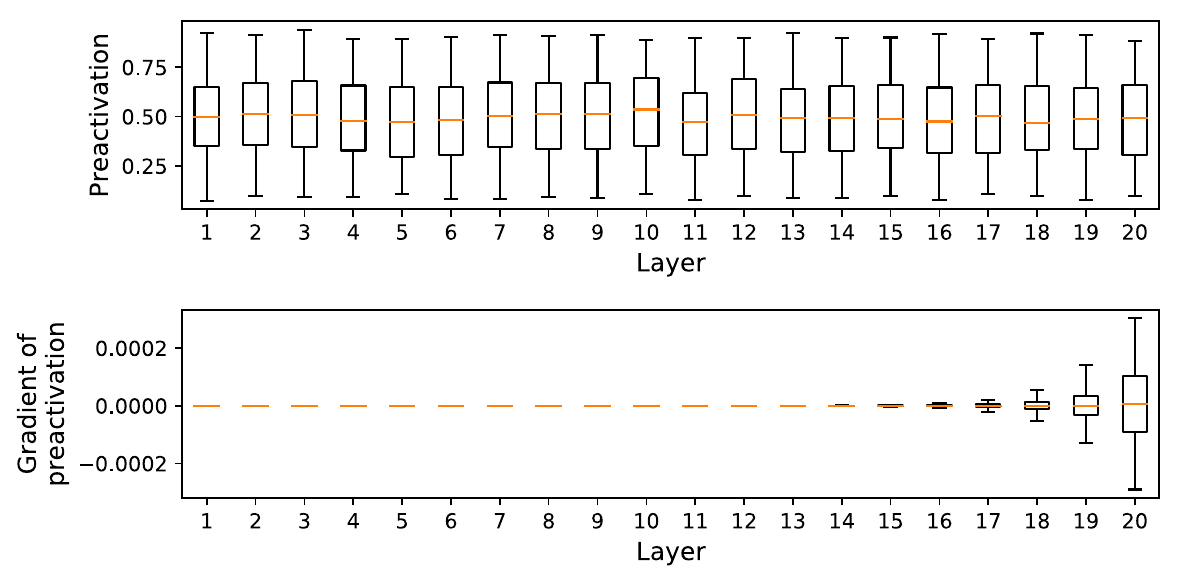}
        \label{fig:mlp_20_256_sigmoid_gradient}}
    \hfill%
    \subfloat[MLP with LCWs.]{
        \includegraphics[width=0.8\linewidth, clip]{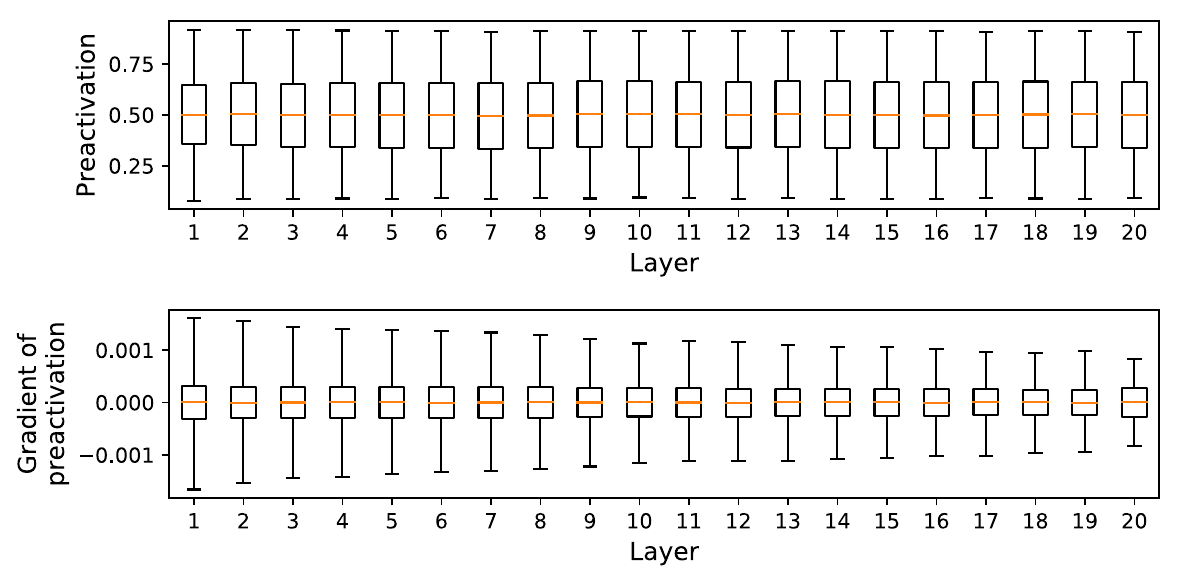}
        \label{fig:mlp_20_256_sigmoid_gradient_lcw}}
    \caption{Boxplot summaries of the preactivation (top) and its gradient (bottom)
        in 20-layered sigmoid MLPs with standard
        weights~(a) and LCWs~(b).}
    \label{fig:mlp_20_256_sigmoid_gradients}
\end{figure}

Next, LCW is applied to the MLP, and then, the weighs are initialized by
the same procedure.
Fig.~\ref{fig:mlp_20_256_sigmoid_gradients}~\subref{fig:mlp_20_256_sigmoid_gradient_lcw}
shows the distribution of the preactivation and its gradient in each
layer regarding the same samples from CIFAR-10.
In contrast to
Fig.~\ref{fig:mlp_20_256_sigmoid_gradients}~\subref{fig:mlp_20_256_sigmoid_gradient},
the variance of the preactivation gradient does not shrink to zero in
the backward chain.
Instead we observe that the variance of the gradient slightly increases
through the backward chain. This can be explained by the fact that the
variance is slightly more amplified in the backward chain than in the
forward chain through the sigmoid layer, as discussed in
Subsection~\ref{sec:vari-analys-nonls}.
These results suggest that we can resolve the vanishing gradient problem in an
MLP with sigmoid activation functions by applying LCW and by
initializing weights to preserve the preactivation variance in
the forward chain.

\section{Related work}
\label{sec:related-work}
Ioffe and Szegedy~\cite{icml2015_ioffe15} proposed the BN approach for accelerating
the training of deep nets.
BN was developed to address the problem of \textit{internal covariate shift}, that
is, training deep nets is difficult because the distribution of the input to
a layer changes as the weights of the preceding layers change during training.
BN is widely adopted in practice and shown to accelerate the training of
deep nets, although it has recently been argued that the success of BN
does not stem from the reduction of the internal covariate shift~\cite{NIPS2018_7515}.
BN computes the mean and standard deviation of~$z_i^{l}$
based on a minibatch, and then, normalizes~$z_i^{l}$ by using these
statistics.
G\"{u}l\c{c}ehre and Bengio~\cite{JMLR:v17:gulchere16a} proposed the
\textit{standardization layer~(SL)} approach, which is similar to BN.
The main difference is that SL normalizes~$a_i^{l}$, whereas BN
normalizes~$z_i^{l}$.
Interestingly, both BN and SL can be considered mechanisms for reducing the
activation shift. On one hand, SL reduces the activation shift by
forcing~$\|\hat{\bm{\mu}}\| = 0$ in
Proposition~\ref{prop2}. On the other hand, BN reduces the activation
shift by removing the mean from~$z_i^{l}$ for each neuron.
A drawback of both BN and SL is that the model has to be switched during
inference to ensure that its output depends only on the
input and not the minibatch.
In contrast, the LCW proposed in this paper do not require any change
in the model during inference.

Salimans and Kingma~\cite{salimans2016weight} proposed \textit{weight normalization~(WN)}
in which a weight vector~$\bm{w}_i^{l} \in \mathbb{R}^m$ is reparameterized
as~$\bm{w}_i^{l} = (g_i^l/\|\bm{v}_i^l\|) \bm{v}_i^l$, where~$g_i^l \in \mathbb{R}$
and~$\bm{v}_i^l \in \mathbb{R}^m$ are new parameters.
By definition, WN does not have the property of reducing the activation shift,
because the degrees of freedom of~$\bm{w}_i^l$ are unchanged by the
reparameterization.
They also proposed a minibatch-based initialization by
which weight vectors are initialized so that~$z_i^{l}$ has
zero mean and unit variance,
indicating that the activation shift is resolved immediately after the initialization.
Our preliminary results presented in Section~\ref{sec:experiments} suggest that
to start learning with initial weights that do not incur activation shift
is not sufficient to train very deep nets.
It is important to incorporate a mechanism that reduces the activation shift
during training.

Ba et al.~\cite{layernorm} proposed \textit{layer normalization~(LN)} as a
variant of BN.
LN normalizes~$z_i^{l}$ over the neurons in a layer on a sample in
a minibatch,
whereas BN normalizes~$z_i^{l}$ over the minibatch on a neuron.
From the viewpoint of reducing the activation shift, LN is not as direct
as BN.
Although LN does not resolve the activation shift, it should normalize
the degree of activation shift in each layer.

Huang et al.~\cite{huang2017centered} proposed \textit{centered weight
    normalization (CWN)} as an extension of WN, in which
parameter~$\bm{v}_i^l$ in WN is reparameterized
by~$\bm{v}_i^l = \tilde{\bm{v}}_i^l - \bm{1}_m (\bm{1}_m^\top \tilde{\bm{v}}_i^l)/m$
with~$\tilde{\bm{v}}_i^l \in R^m$.
CWN therefore forces a weight vector~$\bm{w}_i^l$ to satisfy
both~$\|\bm{w}_i^l\|=1$ and~$\bm{1}_m^\top \bm{w}_i^l = 0$.
CWN was derived from the observation that, in practice, weights in a
neural network are initially sampled from a distribution with
zero-mean.
CWN and LCW share the idea of restricting weight vectors so that they
have zero mean during training, although they come from different
perspectives and have different implementations.
The main differences between CWN and LCW are the following:
CWN forces weight vectors to have both unit norm and zero mean, whereas
LCW only forces the latter from the analysis that the latter
constraint is essential to resolve the activation shift;
LCW embeds the constraint into the network structure using the
orthonormal basis of a subspace of weight vectors;
the effect of reducing activation shift by introducing LCW is analyzed
from the perspective of variance amplification in both
the forward and backward chains.

Miyato et al.~\cite{spectral_norm} proposed
\textit{spectral normalization (SN)} that constrains
the spectral norm, that is, the largest singular value, of a weight
matrix equal to 1.
SN was introduced to control the Lipschitz constant
of the discriminator in the GAN framework~\cite{NIPS2014_5423} to
stabilize the training.
The relationship between the spectral norm of weights and the
generalization ability of deep nets is discussed
in~\cite{spectral_regularization}.
However, controlling the spectral norm of weight matrices is orthogonal to
the reduction of the activation shift.

He et al.~\cite{he2016deep} proposed \textit{residual network} that
consists of a stack of residual blocks with skip connections.
If we denote the input to the $l$-th residual block
by~$\bm{x}^l \in \mathbb{R}^m$,
the output~$\bm{x}^{l+1}$, which is the input to the next residual
block, is given by~$\bm{x}^{l+1} = \bm{x}^l + \bm{\mathcal{F}}_l(\bm{x}^l)$,
where~$\bm{\mathcal{F}}_l:\mathbb{R}^m \to \mathbb{R}^m$ is a
mapping defined by a stack of nonlinear layers.
In contrast to the original residual network that regard the activation
as~$\bm{x}^l$, He et al.~\cite{he2016identity} proposed
\textit{preactivation structure} in which the
preactivation is regarded as~$\bm{x}^l$.
Residual network will indirectly reduce the impact of the activation
shift. The reason is explained below: In a residual network,
it holds that
$\bm{x}^L = \bm{x}^0 + \sum_{l=0}^{L-1} \bm{\mathcal{F}}_l(\bm{x}^l)$.
The activation shift can occur in each of~$\bm{\mathcal{F}}_l(\bm{x}^l)$,
that is, each output element of~$\bm{\mathcal{F}}_l(\bm{x}^l)$ has
different mean.
However, the shift pattern is almost random in
each~$\bm{\mathcal{F}}_l(\bm{x}^l)$, and consequently,
the mean shift in~$\bm{x}^L$ can be moderate because it is the average over
these random shifts.
This may be another reason why residual networks are successful in
training deep models.

\section{Experiments}
\label{sec:experiments}

We conducted experiments using the CIFAR-10
and CIFAR-100 datasets~\cite{krizhevsky2009learning},
which consist of color natural images each of which is annotated
corresponding to 10 and 100 classes of objects, respectively.
We preprocessed each dataset by subtracting the channel means and
dividing by the channel standard deviations.
We adopted standard data augmentation~\cite{he2016deep}: random cropping
and horizontal flipping.

All experiments were performed using Python~3.6 with PyTorch~0.4.1~\cite{paszke2017automatic}
on a system running Ubuntu 16.04 LTS with GPUs.
We implemented LCW using standard modules equipped with PyTorch.
As implementation of BN, SL, WN, and SN, we employed corresponding libraries
in PyTorch. We implemented CWN by modifying modules for WN.

\subsection{Deep MLP with Sigmoid Activation Functions}
\label{sec:exp-feed-forward-network}
We first conducted experiments using an MLP model with 50 hidden layers, each
containing 256 hidden units with sigmoid activation functions,
followed by a softmax layer combined with a cross-entropy
loss function.
We applied each of LCW, BN, SL, WN, CWN, and SN to the model, and compared
the performance.
We also considered models with each of the above techniques (other than
BN) combined with BN.
These models are annotated with, for example, ``BN+LCW'' in the results.

Models with LCW were initialized following the method described in
Section~\ref{sec:relat-vanish-grad}.
Models with WN or CWN were initialized according to~\cite{salimans2016weight}.
Models with BN, SL, or SN were initialized using the method proposed
in~\cite{glorot2010understanding}.
Each model was trained using a stochastic gradient descent with a
minibatch size of~$128$, momentum of~$0.9$, and weight decay of~$0.0001$
for $100$~epochs.
The learning rate starts from 0.1 and is multiplied by 0.95 after every
epoch until it reaches the lower threshold of 0.001.

Fig.~\ref{fig:result_mlp_50_256} shows the curve of training loss, test
loss, training accuracy, and test accuracy of each model on each
dataset, in which the horizontal axis shows the training epoch.
The results of MLPs with WN or SN are omitted in
Fig.~\ref{fig:result_mlp_50_256}, because the training of these models
did not proceed at all.
This result matches our expectation that reducing the activation shift
is essential to train deep neural networks, because WN and SN themselves
do not have the effect of reducing activation shift as discussed in
Section~\ref{sec:related-work}.
\begin{figure}[htbp]
    \centering
    \subfloat[Results for the CIFAR-10 dataset.]{
        \includegraphics[width=0.8\linewidth, clip]{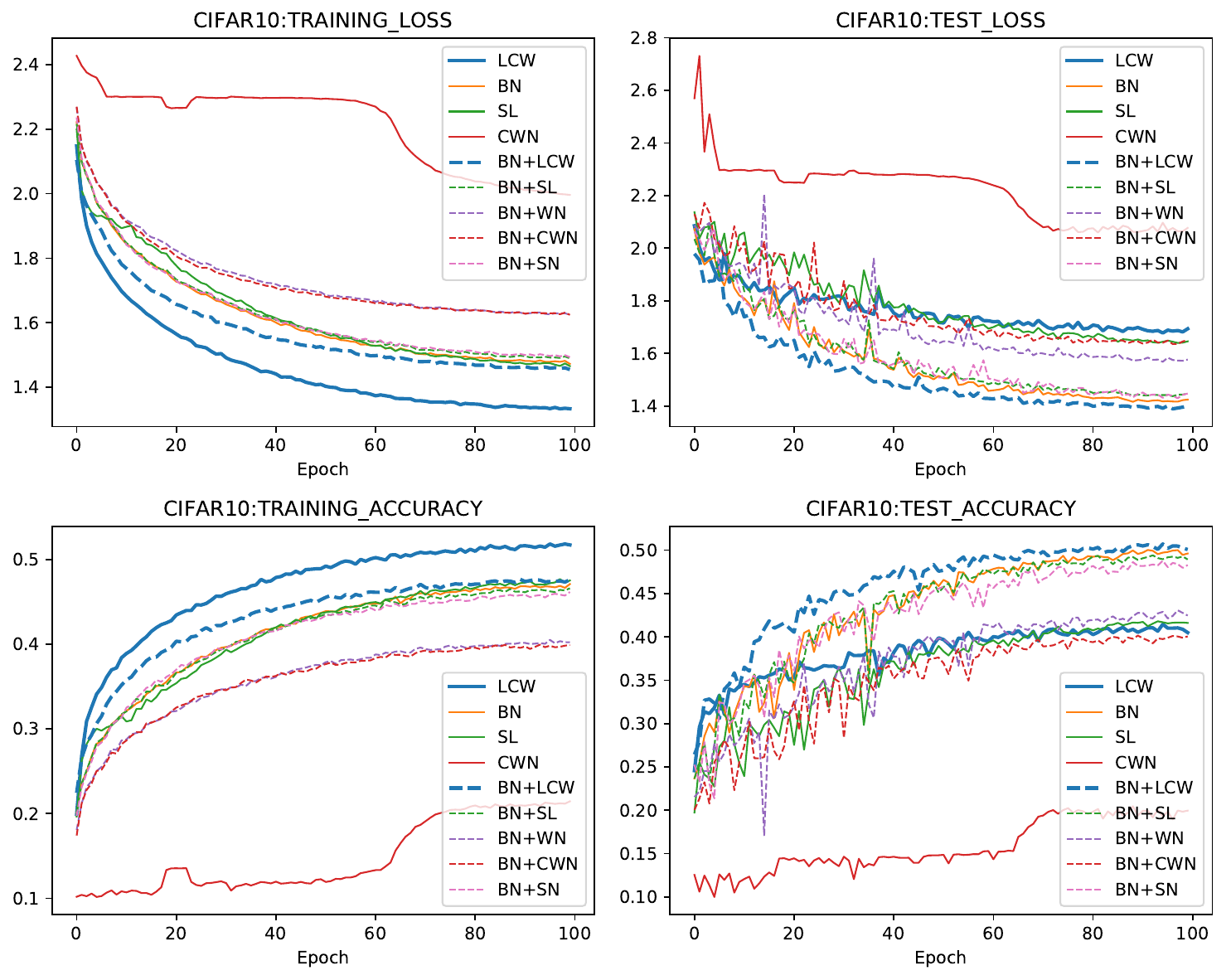}
        \label{fig:result_mlp_50_256_cifar10}}
    \hfill%
    \subfloat[Results for the CIFAR-100 dataset.]{
        \includegraphics[width=0.8\linewidth, clip]{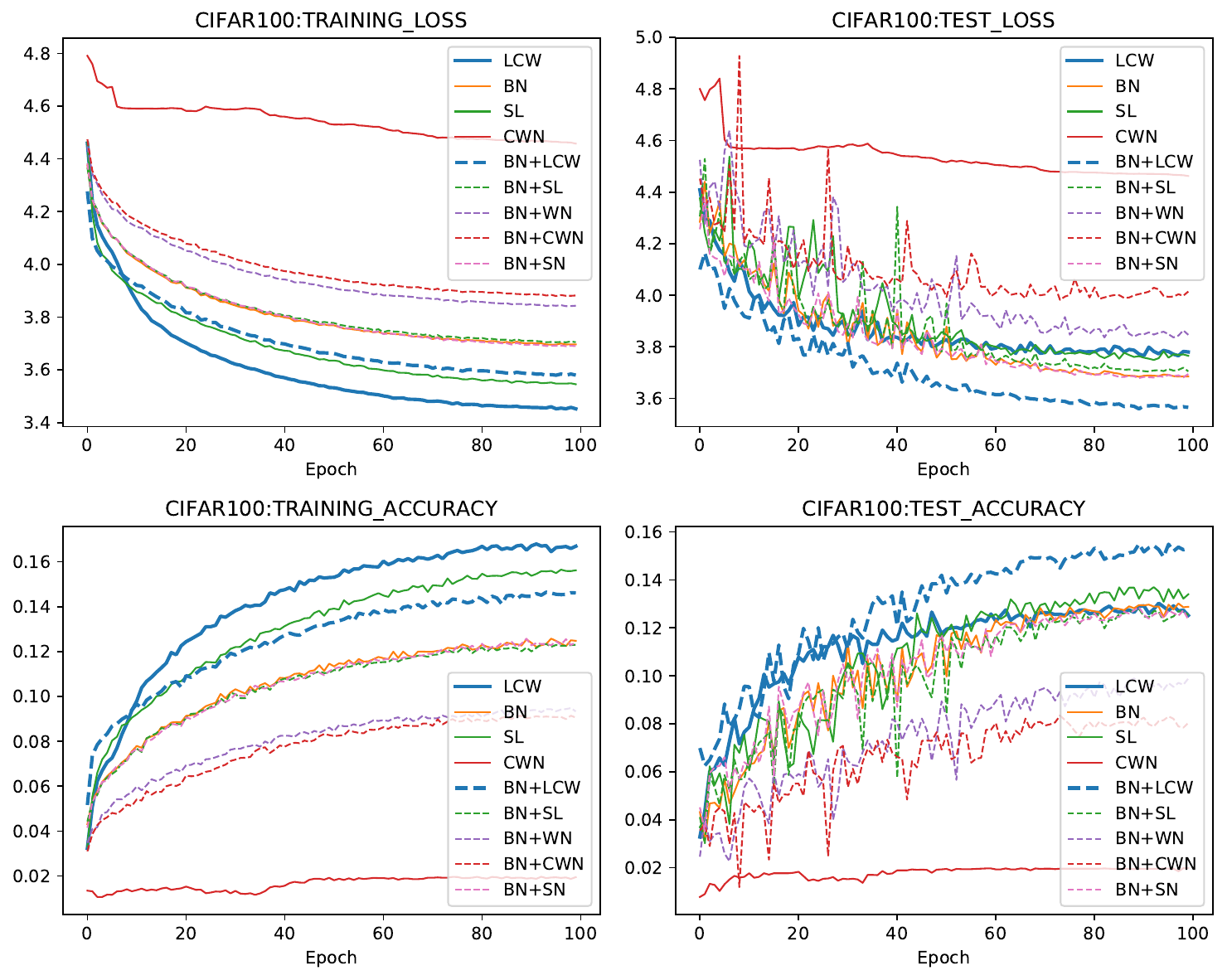}
        \label{fig:result_mlp_50_256_cifar100}}
    \caption{Training loss (upper left), test loss (upper right), training
        accuracy (lower left), and test accuracy (lower right) of 50-layer MLPs
        for CIFAR-10 (a) and CIFAR-100 (b).}
    \label{fig:result_mlp_50_256}
\end{figure}
We see that LCW achieves higher rate of convergence and gives better scores
with respect to the training loss/accuracy compared with other models.
However, with respect to the test loss/accuracy, the scores of LCW
are no better than that of other models.
This result suggests that LCW has an ability to accelerate the network
training but may increase the risk of overfitting.
In contrast, combined with BN, LCW achieves better performance in test
loss/accuracy, as shown by the results annotated with ``BN+LCW'' in
Fig.~\ref{fig:result_mlp_50_256}.
We think such improvement was provided because LCW accelerated the
training while the generalization ability of BN was maintained.

\subsection{Deep Convolutional Networks with ReLU Activation Functions}
\label{sec:exp-resid-conv-netw}
In this subsection, we evaluate LCW using convolutional
networks with ReLU activation functions.
As base models, we employed the following two models:

\noindent
\underline{VGG19}:
A 19-layer convolutional network in which 16 convolutional layers are
connected in series, followed by three fully connected layers with
dropout~\cite{vgg}.
We inserted BN layers before each ReLU layer in VGG19,
although the original VGG model does not include BN layers.\footnote{This
    is mainly because VGG was proposed earlier than BN.}

\noindent
\underline{ResNet18}:
An 18-layer convolutional network with residual
structure~\cite{he2016deep}, which consists of eight
residual units each of which contains two convolutional layers in the
residual part.
We employed the full preactivation structure proposed
in~\cite{he2016identity}.
In ResNet18, BN layers are inserted before each ReLU layer.

We applied LCW, WN, CWN, or SN to VGG19 and ResNet18, respectively, and
compared the performance including the plain VGG19 and ResNet18 models.
Each model was trained using a stochastic gradient descent with a
minibatch size of 128, momentum of 0.9, and weight decay of 0.0005.
For the CIFAR-10 dataset, we trained each model for 300~epochs with the
learning rate that starts from 0.1 and is multiplied by 0.95 after every
three epochs until it reaches 0.001.
For the CIFAR-100 dataset, we trained each model for 500~epochs with the
learning rate multiplied by 0.95 after every five epochs.

Table~\ref{tab:conv_results} shows the test accuracy and loss for the
CIFAR-10 and CIFAR-100 datasets, in which each value was evaluated as the
average over the last ten epochs of training.
\begin{table}[tb]
    \centering
    \caption{Test accuracy/loss of convolutional models for CIFAR-10 and
        CIFAR-100 datasets.}
    \label{tab:conv_results}
    {\tabcolsep = 3.7pt
        \begin{tabular}{lrrp{0pt}rr}
            \hline
                                              & \multicolumn{2}{c}{CIFAR-10}      &                               & \multicolumn{2}{c}{CIFAR-100}                 \\
            \cline{2-3} \cline{5-6}
            Model                             & \multicolumn{1}{c}{Test Accuracy} & \multicolumn{1}{c}{Test Loss}

                                              &                                   &
            \multicolumn{1}{c}{Test Accuracy} & \multicolumn{1}{c}{Test Loss}

            \\
            \hline
            VGG19                             & 0.936                             & 0.354                         &                               & 0.732 & 1.788 \\
            VGG19+LCW                         & 0.938                             & 0.332                         &                               & 0.741 & 1.569 \\
            VGG19+WN                          & 0.931                             & 0.391                         &                               & 0.725 & 1.914 \\
            VGG19+CWN                         & 0.934                             & 0.372                         &                               & 0.727 & 1.827 \\
            VGG19+SN                          & 0.936                             & 0.358                         &                               & 0.733 & 1.644 \\
            \hline
            ResNet18                          & 0.952                             & 0.204                         &                               & 0.769 & 0.978 \\
            ResNet18+LCW                      & 0.952                             & 0.187                         &                               & 0.770 & 0.955 \\
            ResNet18+WN                       & 0.951                             & 0.206                         &                               & 0.777 & 0.947 \\
            ResNet18+CWN                      & 0.948                             & 0.216                         &                               & 0.781 & 0.949 \\
            ResNet18+SN                       & 0.952                             & 0.206                         &                               & 0.780 & 1.015 \\
            \hline
        \end{tabular}
    }
\end{table}
We see that LCW improves the generalization performance of VGG19 with
respect to both the test accuracy and loss. The improvement is more
evident for the CIFAR-100 dataset.
The curve of training loss and accuracy of VGG19-based models for
CIFAR-100 are shown in Fig.~\ref{fig:result_VGG19_cifar100}.
We see that LCW enhances the rate of convergence, which we think
lead to the better performance.
\begin{figure}[t]
    \centering
    \includegraphics[width=0.8\linewidth, clip]{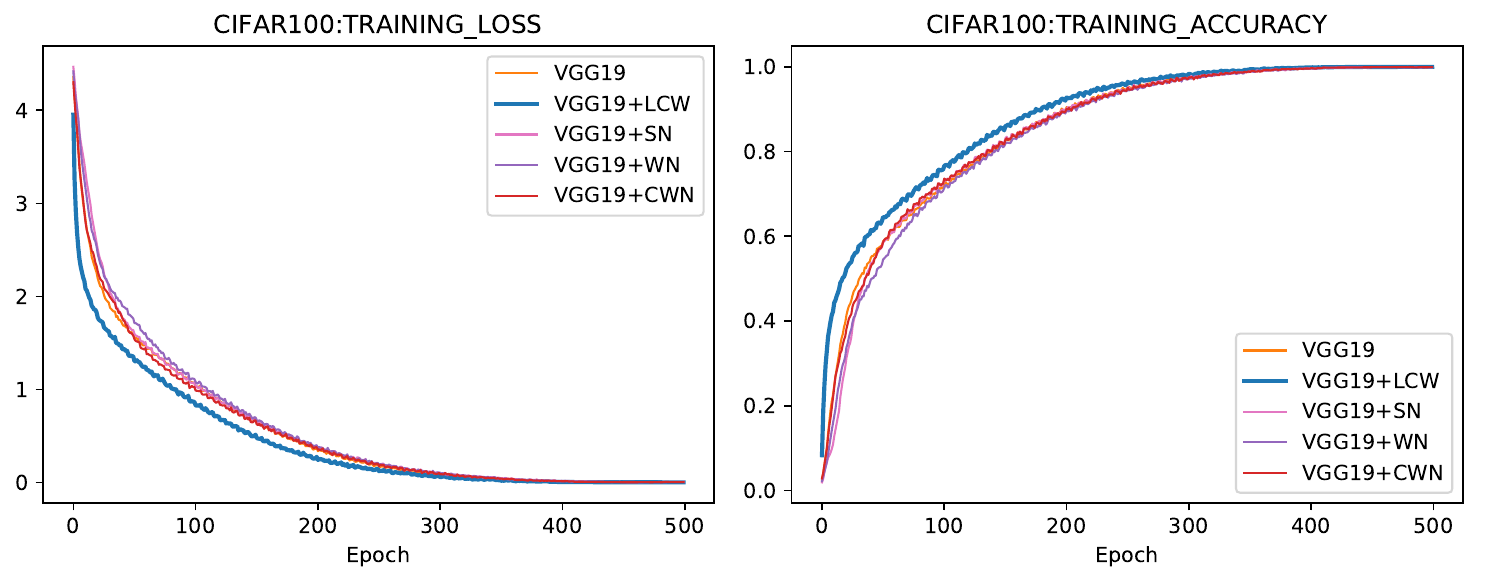}
    \caption{Training loss (left) and training accuracy (right) of the
        VGG19-based models for the CIFAR-100 dataset.}
    \label{fig:result_VGG19_cifar100}
\end{figure}
In contrast, the improvement brought by LCW is less evident in
ResNet18, in particular, with respect to the test accuracy.
We observed little difference in the training curve of ResNet18 with and
without LCW.
A possible reason for this is that the residual structure itself has an
ability to mitigate the impact of the activation shift, as discussed in
Section~\ref{sec:related-work}, and therefore the reduction of
activation shift by introducing LCW was less beneficial for ResNet18.

\section{Conclusion}
\label{sec:conclusion}
In this paper, we identified the activation shift in a neural network:
the preactivation of a neuron has non-zero mean depending on the angle
between the weight vector of the neuron and the mean of the activation
vector in the previous layer.
The LCW approach was then proposed to reduce the activation shift.
We analyzed how the variance of variables in a neural network changes
through layer operations in both forward and backward chains, and
discussed its relationship to the vanishing gradient problem.
Experimental results suggest that the proposed method works well in a
feedforward network with sigmoid activation functions, resolving the
vanishing gradient problem.
We also showed that existing methods that successfully accelerate the training of deep
neural networks, including BN and residual structures,
have an ability to reduce the effect of activation shift, suggesting
that alleviating the activation shift is essential for efficient
training of deep models.
The proposed method achieved better
performance when used in a convolutional network with ReLU activation
functions combined with BN.
Future work includes investigating the applicability of the proposed
method for other neural network structures, such as recurrent structures.
%
%
\bibliographystyle{plainnat}
\bibliography{references}

\newpage
\appendix
\pagestyle{empty}
\section{Proofs}
\begin{proof}[Proof of Proposition~\ref{prop1}]
    It follows from
    \[
        E(\bm{w}_i^l \cdot \bm{a}^{l-1}) = \bm{w}_i^l \cdot E(\bm{a}^{l-1}) = \bm{w}_i^l \cdot (\gamma \bm{1}_m) =
        \|\bm{w}_i^l\|\ \|\gamma \bm{1}_m\| \cos \theta_i^l = |\gamma|\sqrt{m} \|\bm{w}_i^l\| \cos \theta_i^l,
    \]
    where $E(x)$ denotes the expected value of random variable~$x$. 
\end{proof}
\begin{proof}[Proof of Proposition~\ref{prop2}]
    The proof is the same as that of Proposition~1. 
\end{proof}
\begin{proof}[Proof of Proposition~\ref{prop3}]
    $E(\bm{w}_i^l \cdot \bm{a}^{l-1}) = \bm{w}_i^l \cdot E(\bm{a}^{l-1}) = \gamma \left(
        \bm{w}_i^l \cdot \bm{1}_m \right) =  0$. 
\end{proof}
\begin{proof}[Proof of Proposition~\ref{prop4}]
    \begin{align*}
        E(z_i^l) & = \bm{w}_i^l \cdot E(\bm{a}^{l-1}) =
        \gamma_{\bm{a}^{l-1}} \bm{w}_i^l \cdot \bm{1}_m = 0,                   \\
        V(z_i^l) & = \left(\bm{w}_i^l\right)^\top Cov(\bm{a}^{l-1}) \bm{w}_i^l
        = \left(\bm{w}_i^l\right)^\top
        \left(\sigma_{\bm{a}^{l-1}}^2\bm{I}_m\right) \bm{w}_i^l
        = \sigma_{\bm{a}^{l-1}}^2 \|\bm{w}_i^l\|^2.
    \end{align*} 
\end{proof}
\begin{proof}[Proof of Proposition~\ref{prop5}]
    \begin{align*}
        E(\nabla_{a_j^{l-1}}) & = \tilde{\bm{w}}_j^l \cdot E(\nabla_{\bm{z}^l}) =
        0,                                                                        \\
        V(\nabla_{a_j^{l-1}}) & = \left(\tilde{\bm{w}}_j^l\right)^\top
        Cov(\nabla_{\bm{z}^l}) \tilde{\bm{w}}_j^l
        = \left(\tilde{\bm{w}}_j^l\right)^\top
        \left(\sigma_{\nabla_{\bm{z}^{l}}}^2 \bm{I}_m\right) \tilde{\bm{w}}_j^l
        = \sigma_{\nabla_{\bm{z}^{l}}}^2 \|\tilde{\bm{w}}_j^l\|^2.
    \end{align*} 
\end{proof}
\begin{proof}[Proof of ``$\eta^l = \xi^l$ when $\bm{W}^l$ is a square matrix'' in Section~\ref{sec:vari-analys-nonls}]
    From the assumption of $\forall i, \|\bm{w}_i^l\|^2 = \eta^l$
    and~$\forall j, \|\tilde{\bm{w}}_j^l\|^2 = \xi^l$,
    we have~$\sum_{i=1}^m \|\bm{w}_i^l\|^2 = m\eta^l$
    and~$\sum_{j=1}^m \|\tilde{\bm{w}}_j^l\|^2 = m\xi^l$, respectively.
    From the fact that~$\sum_{i=1}^m \|\bm{w}_i^l\|^2 = \sum_{j=1}^m
        \|\tilde{\bm{w}}_j^l\|^2 = \sum_{i=1}^m \sum_{j=1}^m (w_{i,j}^l)^2$,
    where~$w_{i,j}^l$ indicates the $(i,j)$-th element of~$\bm{W}^l$, we
    have~$m \eta^l = m \xi^l$. Consequently, it holds that~$\eta^l = \xi^l$.
\end{proof}
\begin{proof}[Proof of Proposition~\ref{prop6}]
    With the assumption that~$f$ is the ReLU activation function and~$z_i^l$
    follows~$\mathcal{N}(0, \sigma_{z_i^l}^2)$, $a_i^l = f(z_i^l)$ follows
    the mixture distribution
    \begin{align}
        0.5 \times \mathcal{N}(0, 0)
        + 0.5 \times \mathcal{TN}_{(0, \infty)}(0, \sigma_{z_i^l}^2), \notag
    \end{align}
    where~$\mathcal{TN}_{(0, \infty)}(0, \sigma_{z_i^l}^2)$ denotes the
    truncated normal distribution: $\mathcal{N}(0, \sigma_{z_i^l}^2)$ bounded
    within the interval~$(0, \infty)$.
    The mean and variance of~$\mathcal{TN}_{(0, \infty)}(0, \sigma_{z_i^l}^2)$
    are given by~$\sigma_{z_i^l} \sqrt{\frac{2}{\pi}}$
    and~$\sigma_{z_i^l}^2 \left(1 - \frac{2}{\pi}\right)$, respectively.
    The variance of~$a_i^l$ is then given as follows:\footnote{The variance of a
        random variable that follows the mixture of distributions A and B with
        mixture weights~$p_A$ and~$p_B$, respectively, is given
        by
        \begin{align}
            p_A \sigma_A^2 + p_B \sigma_B^2 + p_A p_B (\mu_A - \mu_B)^2, \notag
        \end{align}
        where~$\mu_A$ and~$\mu_B$ are the mean of distributions A and B,
        respectively, and~$\sigma_A^2$ and~$\sigma_B^2$ are the variance of
        distributions A and B, respectively.}
    \begin{align}
        V(a_i^l) & =  0.5 \times 0 + 0.5 \times \sigma_{z_i^l}^2 \left(1 - \frac{2}{\pi}\right) + 0.5^2 \times \left(\sigma_{z_i^l} \sqrt{\frac{2}{\pi}}\right)^2 \notag \\
                 & = \frac{\sigma_{z_i^l}^2}{2} \left(1 - \frac{1}{\pi}\right). \notag
    \end{align}

    In contrast, from the assumption that~$f$ is the ReLU function
    and~$z_i^l$ follows~$\mathcal{N}(0, \sigma_{z_i^l}^2)$, $f'(z_i^l)$ follows
    a Bernoulli distribution with probability~$p=0.5$ whose mean and
    variance are 0.5 and 0.25, respectively.
    The variance of~$\nabla_{z_i^l} = f'(z_i^l)\nabla_{a_i^l}$ is then given
    as follows:\footnote{Given
        two independent random variables~$X$ and~$Y$, the variance of~$XY$ is
        given by
        \begin{align}
            V(XY) = V(X) V(Y) + E^2(X) V(Y) + E^2(Y) V(X). \notag
        \end{align}}
    \begin{align}
        V(\nabla_{z_i^l}) & = 0.25 \times \sigma_{\nabla_{a_i^l}}^2 + 0.5^2
        \times \sigma_{\nabla_{a_i^l}}^2 + 0 \times 0.25 \notag             \\
                          & = \frac{\sigma_{\nabla_{a_i^l}}^2}{2} \notag.
    \end{align} 
\end{proof}
\section{Calculating an orthonormal basis of $\mathcal{W}_{LC}$}
For calculating an orthonormal basis of $\mathcal{W}_{LC}$,
we consider a
matrix~$\tilde{\bm{B}}_m \in \mathbb{R}^{m \times \left(m-1\right)}$
whose column vectors span~$\mathcal{W}_{LC}$.
For example, $\tilde{\bm{B}}_m$ is given by
\begin{align}
    \tilde{\bm{B}}_m = \left(\begin{array}{c}
                                     \bm{I}_{m-1} \\
                                     -\bm{1}_{m-1}^\top
                                 \end{array}\right). \notag
\end{align}
Given~$\tilde{\bm{B}}_m$,
we apply QR decomposition to factorize~$\tilde{\bm{B}}_m$
into~$\bm{Q}$ and~$\bm{R}$, where~$\bm{Q} \in \mathbb{R}^{m \times \left(m-1\right)}$
is a matrix whose columns are the orthonormal basis for~$\mathcal{W}_{LC}$
and~$\bm{R} \in \mathbb{R}^{\left(m-1\right) \times \left(m-1\right)}$
is an upper triangular matrix.
We can utilize~$\bm{Q}$ as~$\bm{B}_m$ in Section~\ref{sec:learning-lcw-via}.
%
\end{document}